\documentclass{article} 
\usepackage{iclr2026_conference,times}

\usepackage{amsmath,amsfonts,bm}

\def\eqref#1{equation~\ref{#1}}

\def\1{\bm{1}}

\DeclareMathAlphabet{\mathsfit}{\encodingdefault}{\sfdefault}{m}{sl}
\SetMathAlphabet{\mathsfit}{bold}{\encodingdefault}{\sfdefault}{bx}{n}

\usepackage{hyperref}
\usepackage{url}
\usepackage{amsmath,amssymb,amsfonts}
\usepackage{graphicx}
\usepackage{textcomp}
\usepackage{xcolor}
\usepackage{comment}
\usepackage{float}
\usepackage{algorithm}
\usepackage{algpseudocode}
\usepackage{amsmath}
\usepackage{amssymb}
\usepackage{multirow}
\usepackage{mathtools}
\usepackage{dsfont}
\usepackage{amsthm}

\newtheorem{theorem}{Theorem}

\usepackage{gensymb}
\usepackage{subcaption}
\usepackage{siunitx}
\usepackage{wrapfig}

\makeatletter
\algrenewcommand\alglinenumber[1]{}
\makeatother

\title{Elbow-based MoE Routing: A Training-Free Inference Time Plugin for Expert Selection}

\author{
  \textbf{Robin Pan}$^1$\thanks{rpan@college.harvard.edu} \quad \textbf{Raymond Liu}$^{1\dagger}$ \quad \textbf{Daniel Fang}$^{1}$\thanks{These authors contributed equally} \quad \textbf{Adelina Andrei}$^2$ \quad \textbf{Rosa Wu}$^1$ \\
  $^1$ John A. Paulson School of Engineering and Applied Sciences, Harvard University \\
  $^2$Department of Mathematics, Harvard University 
}

\iclrfinalcopy 
\begin{document}
\maketitle

\vspace{-10pt}
\begin{abstract}
Mixture-of-Experts (MoE) models enable model scaling while maintaining low inference-time compute by activating only a subset of experts per token. However, conventional routing relies on a fixed top-k selection, forcing the model to spend the same compute regardless of how many experts are relevant. We introduce elbow-based routing, a training-free inference-time modification that dynamically adjusts the number of experts on a per-token basis. Our method examines the sorted router probability distribution and identifies an elbow point that separates high- and low-probability experts. We find that most router distributions exhibit clear inflection points suitable for this strategy, and we show both theoretically and empirically that elbow-based routing preserves expert load balance. Experiments on a state-of-the-art MoE model demonstrate an average latency reduction of 5.3\% while maintaining accuracy across six benchmarks.
\end{abstract}

\section{Introduction}

Mixture-of-Experts (MoE) architectures scale language models efficiently by activating only a subset of experts per token, enabling models to grow in size without proportionally increasing inference cost (\cite{moe}). During both training and inference, MoE model routers activate the top-$k$ experts with the highest logits across all tokens regardless of token. Consequently, some tokens with only a few relevant experts consume unnecessary compute, while tokens requiring more experts may be under-routed.

Prior works in dynamic routing require either training a router, expensive hyperparameter searches, or both. \cite{huang2024harder} demonstrates that some inputs benefit from increased expert capacity, and uses auxiliary losses to train a top-$p$ router based on a threshold probability mass $p$. However, $p$ is a hyperparameter that must be tuned via grid search, which is an expensive process and may be sensitive to biases in the validation data. DynMoE~\cite{dynmoe} treats routing as a multi-label classification problem with each expert as a label and allows top-any, but also requires router training.  This motivates our central question: \textit{Can MoE models dynamically adjust the number of active experts per token to reduce latency without retraining or additional hyperparameter tuning?}

We introduce elbow-based routing, a simple inference-time plugin that adaptively selects $k$ by detecting the elbow point in the router’s sorted probability curve (Fig.~\ref{fig1}), which allows us to reduce compute without significantly affecting accuracy. We evaluate our approach on OLMoE, a state-of-the-art MoE model~\cite{olmoe}. Our contributions are as follows: 

\begin{enumerate}
    \item Our work introduces the first training-free and inference-time plugin for dynamic routing that does not require altering model weights, architecture, or any additional losses.
    \item We conduct an analysis of router probability distributions and find that the vast majority of router probabilities have distinct, sharp elbows and this characteristic is largely independent of input type or router layer.
    \item We conduct a comprehensive empirical evaluation across MMLU, ARC-Easy, ARC-Challenge, HellaSwag, PIQA, and WinoGrande, demonstrating an average latency reduction of 5.3\% while maintaining accuracy and having minimal effect on load balancing.
\end{enumerate}

\newpage
\section{Elbow-Based Routing}

\begin{wrapfigure}{r}{0.5\textwidth}
\vspace{-0.9\baselineskip}
\centering
\includegraphics[width=\linewidth]{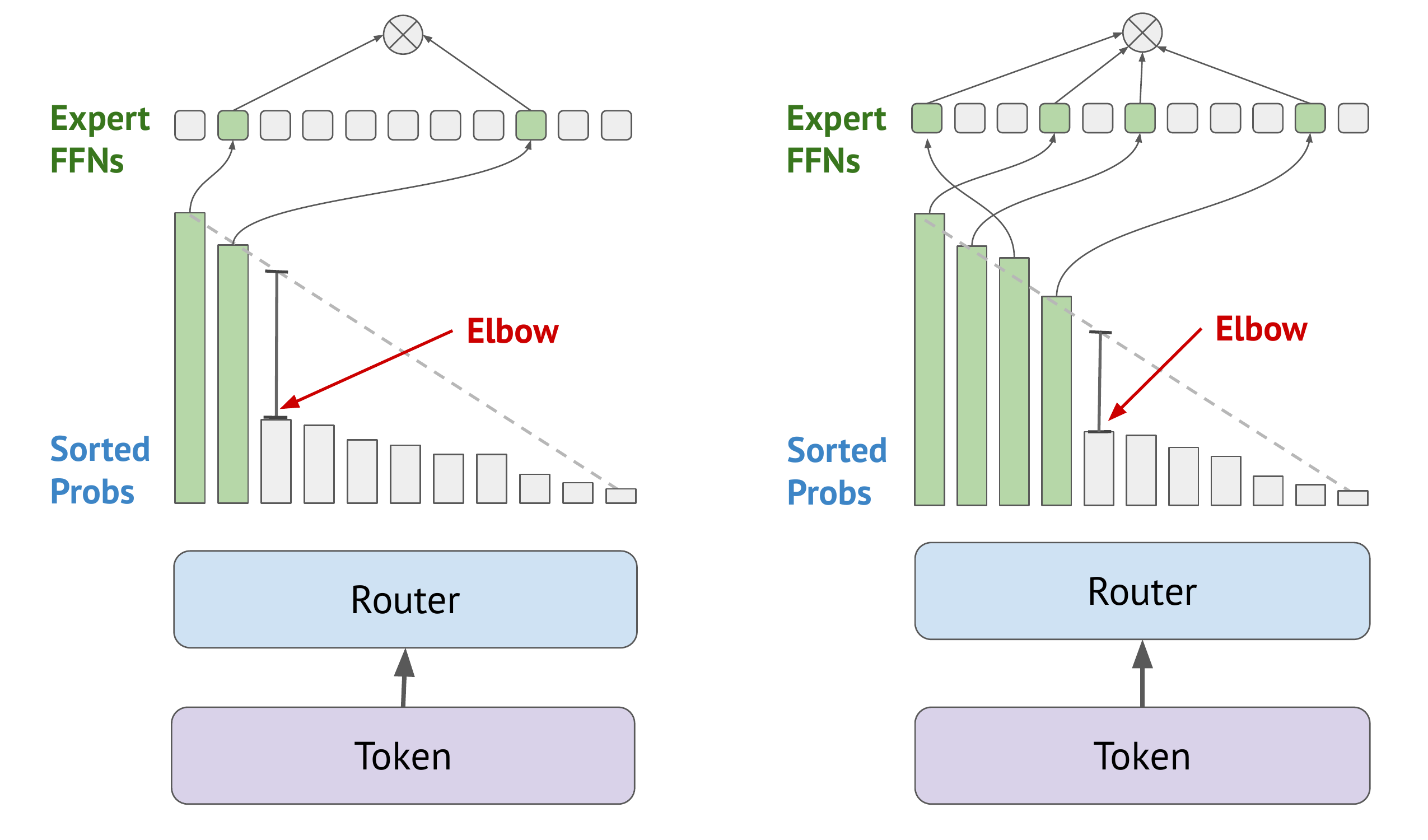}
\caption{\textbf{Elbow-based routing}. Our method adapts to each token's probability curve, which allows us to reduce computation when a clear separation exists between high-confidence and low-confidence experts. Sharper elbows (left) select fewer experts, and more gradual elbows (right) select more experts.}
\label{fig:elbow_routing_examples}
\vspace{-\baselineskip}
\label{fig1}
\end{wrapfigure}

We consider a mixture-of-experts model with $N$ experts and a learned router that produces logits $\ell(x)\in\mathbb{R}^N$ for each token $x$. Applying a softmax yields router probabilities $p(x)=\mathrm{softmax}(\ell(x)),$ which we sort in descending order as $p_{(1)}(x)\ge\dots\ge p_{(N)}(x)$. For each token, we compute an elbow index $e(x)$ on the full sorted probability vector $\{p_{(i)}(x)\}_{i=1}^N$. The elbow is identified using a Kneedle-style~\cite{kneedle} criterion that selects the index of maximum deviation from a reference line after normalizing indices and probabilities to $[0,1]$ (Algorithm~\ref{alg:elbow}). Intuitively, the elbow corresponds to the index at which the rapidly decaying head of the distribution transitions into a relatively flat tail.

\subsection{Capped Expert Selection}

The final number of active experts $k(x)$ is $k(x)=\min\bigl(e(x),K\bigr)$, where $K$ is the number of experts activated in standard top-$K$ routing. Given that $k(x)\le K$ by construction, elbow-based routing is a monotone pruning rule on top of an existing router. It preserves expert ordering, never introduces new experts, and requires no retraining or additional routing hyperparameters beyond the model’s fixed top-$K$ constraint. Elbow-based routing is fast, with a time complexity of $\mathcal{O}(N \log N)$ where $N$, the number of experts, is typically on the order of 10 to 100.

\subsection{Signal--Noise Interpretation}
\captionsetup{aboveskip=2pt, belowskip=-6pt}
\begin{wrapfigure}{r}{0.54\textwidth}
  \centering
  \vspace{-\baselineskip}
  \includegraphics[width=\linewidth]{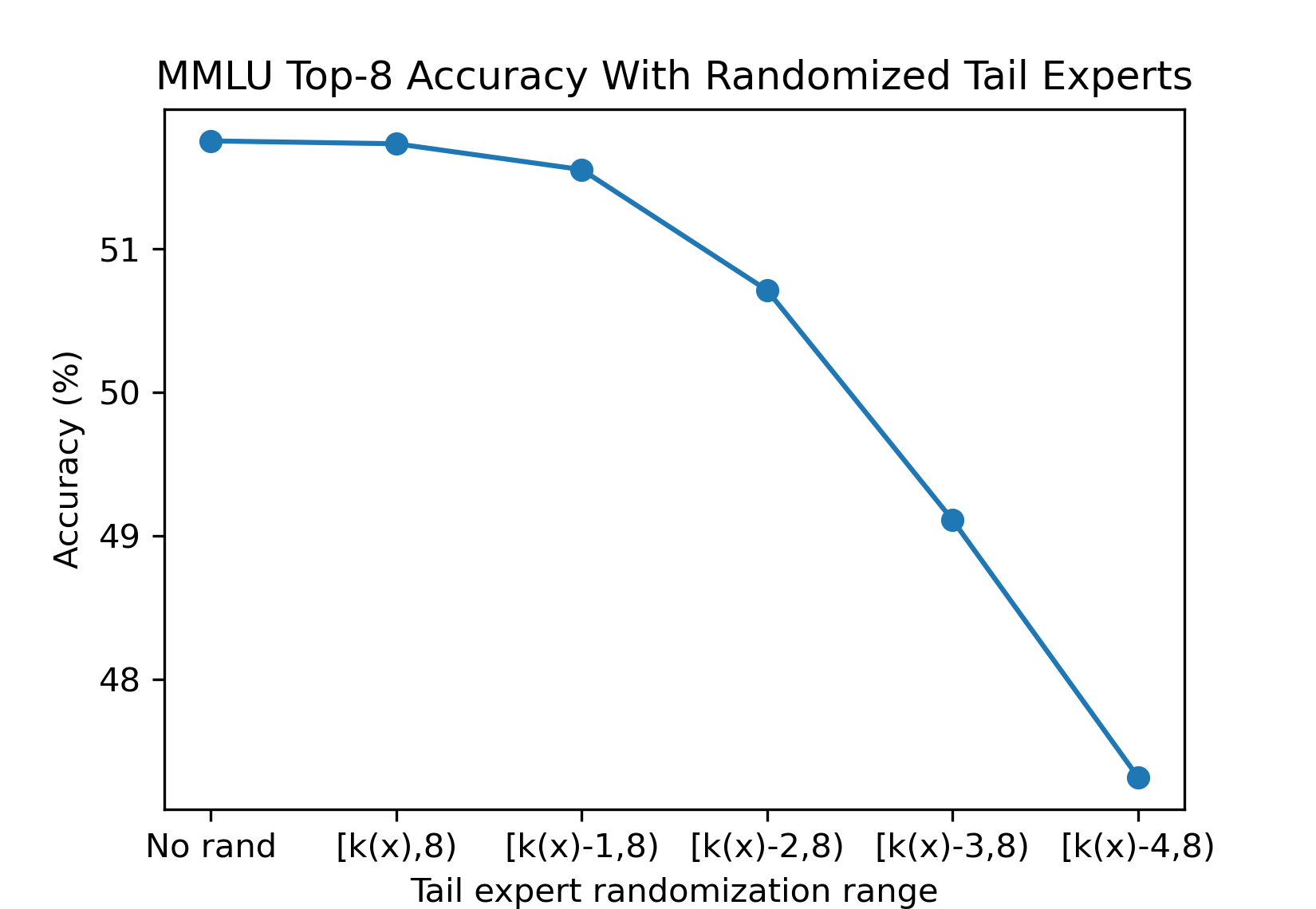}                  
  \caption{\textbf{Tail randomization confirm signal-noise interpretation.} Randomly exchanging tail experts with indices in $[k(x),8)$ with experts in $[k(x),64)$ does not affect accuracy. Accuracy drops when randomizing experts in $[0,k(x))$.} 
  \vspace{-\baselineskip}
  \label{fig:randomization}
\end{wrapfigure}

The effectiveness of elbow-based routing can be understood through a signal--noise interpretation of the router logits. An input vector $x$ to a router may be decomposed as $x = x_{\parallel} + x_{\perp}$, where $x_{\parallel}$ denotes components aligned with routing-relevant directions, and $x_{\perp}$ captures residual variation orthogonal to these directions. Relevant 'signal' experts aligned with $x_{\parallel}$ receive large, structured logits that form the head of the sorted routing distribution, while less relevant 'noise' experts influenced mainly by $x_{\perp}$ exhibit small, unstructured variations, yielding a flat tail.

Under this view, the sorted router distribution naturally exhibits a transition between a small set of high-confidence experts and a diffuse tail. The elbow identifies this transition point, providing a token-specific estimate of how many experts are meaningfully engaged by the router. When such a transition is pronounced, the elbow corresponds to a sharp change in slope, which is reflected geometrically by a large elbow angle. In Section~\ref{sec:elbow_analysis} we quantify this effect by measuring elbow angles across layers and tokens, and we show that the vast majority of routing distributions exhibit sharp elbows, consistent with this signal--noise interpretation.

We empirically confirm this interpretation by evaluating model accuracy when randomizing post-elbow experts. Given a router probability curve with $k(x) < 8$, we replace post-elbow experts (ranks $k(x)$ to 8) with randomly chosen experts of rank $\ge k(x)$. As shown in Fig.~\ref{fig:randomization}, this tail randomization matches the baseline top-8 accuracy with no randomization on MMLU. However, accuracy decreases when replacing pre-elbow experts (ranks 1 to $k(x)$) in the same way. This confirms that the elbow marks a practical signal–noise boundary for routing.

\subsection{Elbow Analysis and Characterization}
\label{sec:elbow_analysis}

We analyzed the `elbow-ness' of router probabilities from OlMoE routers across multiple benchmarks. To characterize `elbow-ness' we calculate an elbow angle $\theta$ between the first normalized point $(0,0)$, the elbow point $(x'_e,p'_e)$, and the last normalized point $(1,1)$.
\begin{align}
\theta &= \arccos\left(
\frac{\mathbf{v}_1 \cdot \mathbf{v}_2}
{\|\mathbf{v}_1\|\,\|\mathbf{v}_2\|}
\right), \\
\text{where}\quad
\mathbf{v}_1 &=
\begin{pmatrix}
-x'_e\\
-p'_e
\end{pmatrix},
\qquad
\mathbf{v}_2 =
\begin{pmatrix}
1-x'_e\\
1-p'_e
\end{pmatrix}.
\end{align}
We analyzed over 2 million router probability curves that were produced using samples from MMLU, ARC-Easy, and ARC-Challenge and found that 99.7\% of them possess clear, sharp elbows ($\leq 135 \degree$) (Fig. ~\ref{fig:elbow_analysis}a).

\begin{figure}[H]
\centerline{\includegraphics[width=\textwidth]{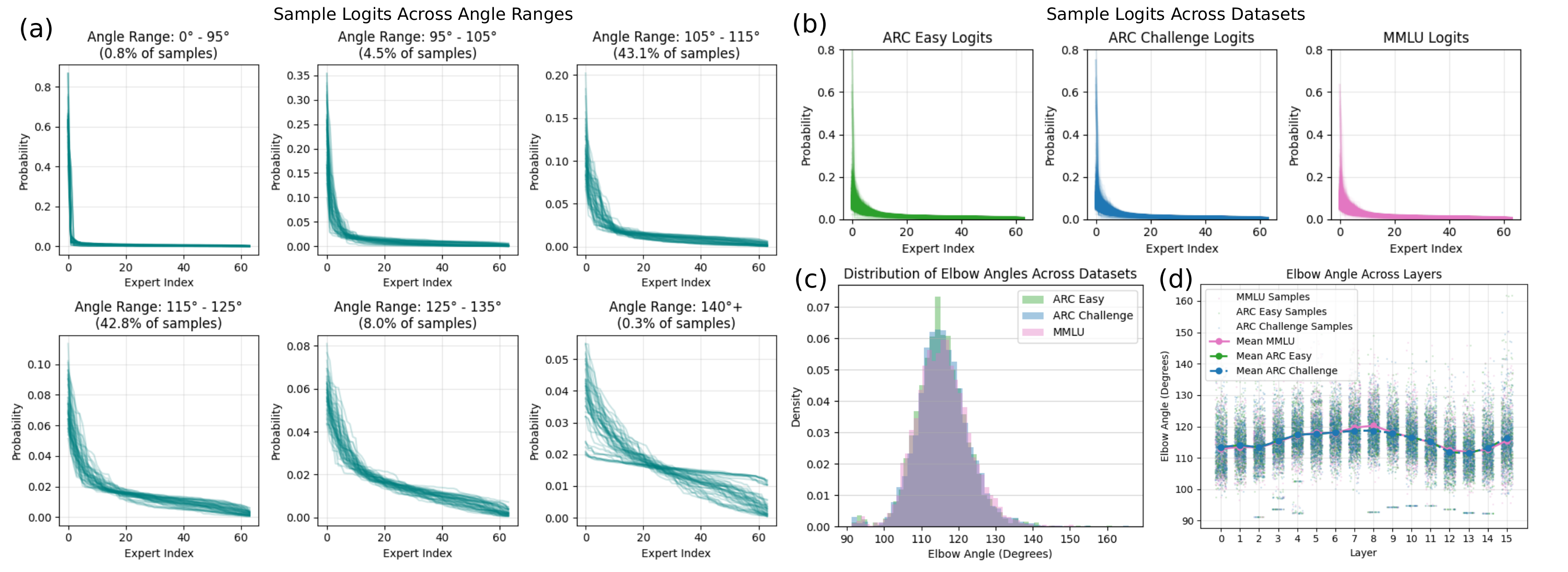}}
\caption{\textbf{Analysis of router probability distributions}. (a) Elbow angle distribution showing 99.7\% of cases exhibit sharp elbows ($\leq \ang{135}$). (b) Mean sorted router probabilities across three benchmark datasets demonstrate consistent router logit patterns. (c) Elbow angle distributions by dataset show minimal variation across datasets. (d) Mean elbow angles remain stable across all 16 router layers for all three datasets, indicating that elbow structure is a robust, layer-invariant property.}
\label{fig:elbow_analysis}
\end{figure}

Separating these probabilities by dataset, we see that router probabilities and elbow angles of these probabilities are nearly identical across datasets (Fig.~\ref{fig:elbow_analysis}b-c). Additionally, mean elbow angles are consistent across router layers and across datasets (Fig.~\ref{fig:elbow_analysis}d). Because the MMLU dataset is categorized by subject, we analyzed router elbow angles and index by subject and found very similar elbow angle and elbow index distributions across all subjects (Supplementary Fig.~\ref{supp1}).

\subsection{Load-Balance Guarantees}
\label{sec:load_balance_guarantees}

For each token, elbow-based routing selects  a subset of the experts chosen by standard top-$K$ routing, reducing the number of active experts without introducing new paths or changing relative ordering. In Appendix~\ref{app:load_balance}, we formalize the effect of this pruning on expert load and show that the induced changes to the normalized per-expert utilization distribution are bounded.

Let $L_i^{(\mathrm{top}\text{-}K)}$ and $L_i^{(\mathrm{elb})}$ denote the expected load of expert $i$ under top-$K$ and elbow-based routing, respectively, and let $\delta = 1 - \mathbb{E}[k(x)] / K$ denote the expected fraction of pruned expert assignments. We prove that the normalized expert utilization distributions
$q^{(\mathrm{top}\text{-}K)}$ and $q^{(\mathrm{elb})}$ satisfy
\[
\left\lVert q^{(\mathrm{elb})} - q^{(\mathrm{top}\text{-}K)} \right\rVert_1
\;\le\;
\frac{2\delta}{1 - \delta}.
\]

\begin{wraptable}{r}{0.48\textwidth}
\vspace{-\baselineskip}
\centering
\scriptsize
\setlength{\tabcolsep}{3pt}
\caption{Layer-wise Load Balancing Analysis}
\begin{tabular}{@{}l l l l@{}}
\hline
Layer & $\ell1$ norm & \% Change & \% Change \\
      &  & in Top-1 Share & in CV\\
\hline
1  & 0.049 & -1.81 & -5.80 \\
2  & 0.055 & -3.71 & -8.83 \\
3  & 0.043 & -4.45 & -4.73 \\
4  & 0.042 & -2.53 & -3.91 \\
5  & 0.047 & -3.85 & -4.29 \\
6  & 0.030 & -2.48 & -2.16 \\
7  & 0.020 & -2.31 & -2.04 \\
8  & 0.019 & -2.08 & -1.79 \\
9  & 0.033 & -2.61 & -1.74 \\
10 & 0.023 & -0.48 & -0.81 \\
11 & 0.020 & +0.38  & +0.05  \\
12 & 0.032 & -1.71 & -1.51 \\
13 & 0.036 & -1.01 & -1.53 \\
14 & 0.048 & -2.95 & -3.20 \\
15 & 0.032 & -1.60 & -1.38 \\
16 & 0.020 & -1.42 & -1.55 \\
\textbf{Mean} & \textbf{0.034} & \textbf{-2.16} & \textbf{-2.83} \\
\hline
\end{tabular}
\label{loadbalance}
\vspace{-0.5\baselineskip}
\end{wraptable}

We measure expert load usage averaged across six benchmarks: MMLU, ARC-Easy, ARC-Challenge, HellaSwag, PIQA, and WinoGrande. Across all benchmarks, elbow-based routing consistently selects around $\mathbb{E}[k(x)] \approx 7.6$ experts per token across all layers as seen in Table~\ref{tab:eval_results}, which corresponds to an average pruning rate $\delta = 1 - \frac{\mathbb{E}[k(x)]}{K} =0.05$. Empirically, we find that changes to load balance are much smaller than the worst-case theoretical bounds at this $\delta$. For each layer, we calculate the $\ell_1$ distance between the normalized expert utilization distributions under top-$K$ and elbow-based-$K$ routing, averaged across all six benchmarks. Table~\ref{loadbalance} reports a mean change of $0.034$ in the $\ell_1$ norm,  which suggests that only $1.7\%$ of the total normalized utilization mass is redistributed across experts. We calculate $\%$ Change in Top-1 Share as $\frac{\max(q^{(top-K)})-\max(q^{(elb)})}{\max(q^{(top-K)}}\cdot100$ and find the maximum expert load increases by $2.16\%$ on average. $\%$ Change in CV is calculated as $\frac{CV^{top-K}-CV^{elb}}{CV^{top-K}}\cdot100$, and CV increases by $2.83\%$. For all three metrics, empirical evidence is significantly smaller than theoretical guarantees in ~\ref{app:load_balance}.

\begingroup
\setlength{\intextsep}{2pt}        
\setlength{\columnsep}{10pt}       
\setlength{\abovecaptionskip}{1pt} 
\setlength{\belowcaptionskip}{0pt}

\begin{wraptable}{l}{0.6\textwidth}
\vspace{-0.2\baselineskip}
\centering
\scriptsize
\setlength{\tabcolsep}{5pt}
\caption{Evaluation Results}
\begin{tabular}{@{}l l c c c c@{}}
\hline
Dataset & Model & Acc (\%) & k-mean & FLOPs & Latency (ms) \\
\hline
\multirow{3}{*}{ARC-Easy}
  & Base OLMoE & 77.82 & 8     & 5.37E8 & 11.203 \\
  & Elbow-8    & \textbf{78.24} & 7.623 & \textbf{4.86E8} & \textbf{10.497} \\
\hline
\multirow{3}{*}{ARC-Challenge}
  & Base OLMoE & 61.86 & 8     & 5.37E8 & 11.531 \\
  & Elbow-8    & \textbf{62.29} & 7.634 & \textbf{4.88E8} & \textbf{10.974} \\
\hline
\multirow{3}{*}{MMLU}
  & Base OLMoE & 51.74 & 8     & 5.37E8 & 13.287 \\
  & Elbow-8    & \textbf{51.75} & 7.613 & \textbf{4.91E8} & \textbf{12.571} \\
\hline
\multirow{3}{*}{HellaSwag}
  & Base OLMoE & \textbf{47.02} & 8     & 5.37E8 & 14.630 \\
  & Elbow-8    & 46.86 & 7.655 & \textbf{4.92E8} & \textbf{13.960} \\
\hline
\multirow{3}{*}{PIQA}
  & Base OLMoE & \textbf{73.18} & 8     & 5.37E8 & 13.398 \\
  & Elbow-8    & 72.85 & 7.589 & \textbf{4.85E8} & \textbf{12.944} \\
\hline
\multirow{3}{*}{WinoGrande}
  & Base OLMoE & \textbf{50.36} & 8     & 5.37E8 & 9.659 \\
  & Elbow-8    & 50.28 & 7.576 & \textbf{4.81E8} & \textbf{8.948} \\
\hline
\end{tabular}
\label{tab:eval_results}
\vspace{\baselineskip}
\end{wraptable}

\section{Experimental Validation}

We evaluate elbow-based routing on OLMoE under baseline top-$8$ routing and elbow-based routing capped at $k\le 8$ (Elbow-$8$) across six standard multiple-choice benchmarks: MMLU, ARC-Easy, ARC-Challenge, HellaSwag, PIQA, and WinoGrande. Table~\ref{tab:eval_results} reports accuracy, the mean number of active experts per token ($k$-mean), estimated FLOPs, and wall-clock latency per forward pass. See ~\ref{implementation} for additional implementation details. Across datasets, Elbow-$8$ matches baseline accuracy (within $\le 0.33$ points) while reducing average latency by $5.3\%$ on our setup and decreasing estimated compute, with an overall $k$-mean of $7.615$. These results indicate that elbow-based routing provides a lightweight inference-time efficiency improvement without retraining or changes to model weights. 
\section{Conclusion}

We introduced elbow-based routing, a training-free inference-time plugin that selects a token-specific number of experts by detecting an elbow in the router’s sorted probability curve. On OLMoE with a cap of $K=8$, this simple monotone pruning rule reduces average latency by $5.3\%$ across six benchmarks while preserving accuracy, without retraining, architectural changes, or additional hyperparameters. Across over 2M routing decisions, we observe that router distributions exhibit sharp and consistent elbows across layers and datasets, supporting a head--tail structure in which a small set of experts carries most routing signal. We further provide load-balance guarantees and empirical evidence that the induced changes in expert utilization are small, suggesting that routers trained with fixed top-$K$ often contain enough structure to enable lightweight test-time efficiency improvements and serve as a diagnostic lens for routing behavior.

\bibliography{iclr2026_conference}
\bibliographystyle{iclr2026_conference}

\appendix
\section{Appendix}
\subsection{Algorithm for Determining Elbows}
Below is an approximation of the Kneedle algorithm used to determine elbow indices: 
\begin{algorithm}[H]
\caption{Elbow Detection via Kneedle Approximation}
\label{alg:elbow}
\begin{algorithmic}
\Require Router logits $l \in \mathbb{R}^{N}$ over $N$ experts
\State $p \gets \mathrm{softmax}(l)$
\State $p \gets \mathrm{sort}_{\downarrow}(p)$ \Comment{sort probabilities in descending order}
\State $p_{\min} \gets \min(p)$; \quad $p_{\max} \gets \max(p)$
\For{$i = 0$ to $N-1$}
    \Comment{normalize to unit square and compute deviation}
    \State $p'_i \gets \dfrac{p_i - p_{\min}}{p_{\max} - p_{\min} + \epsilon}$
    \State $x'_i \gets \dfrac{i}{N-1}$
    \State $D_i \gets p'_i - x'_i$
\EndFor
\State $e \gets \arg\max_{i \in \{0,\dots,N-1\}} D_i$
\State \textbf{return} $k \gets e + 1$
\end{algorithmic}
\end{algorithm}

\newpage
\subsection{Additional Elbow Characterization}
\subsubsection{Sample Elbow Logits}
\begin{figure}[h]
    \centering
    \includegraphics[width=0.75\linewidth]{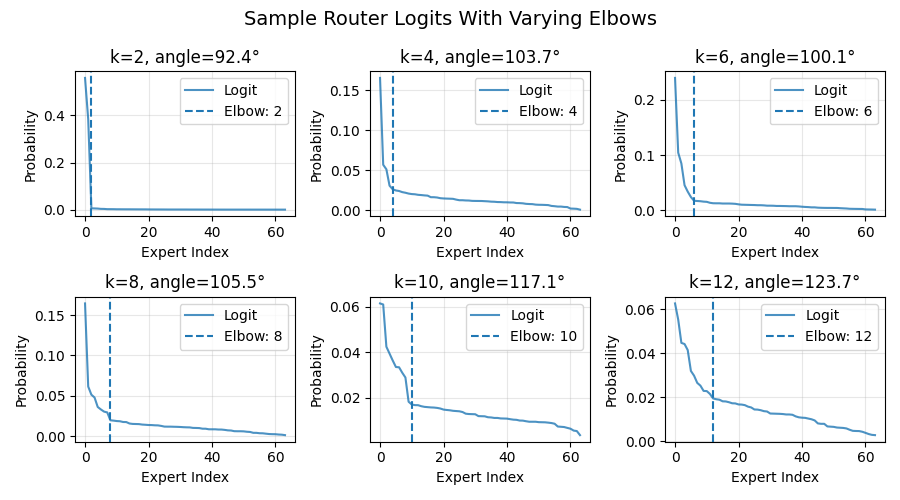}
    \caption{Plots of sorted router probabilities with elbow indexes at $k=2,4,6,8,10,12$ and their respective elbow angles. Our implementation of the Kneedle algorithm effectively identifies the 'elbow' of sorted router probability curves.}
    \label{fig:elbowlogitsample}
\end{figure}

\subsubsection{Elbow Analysis by MMLU Subject}
\begin{figure}[h]
\centerline{\includegraphics[width=\textwidth]{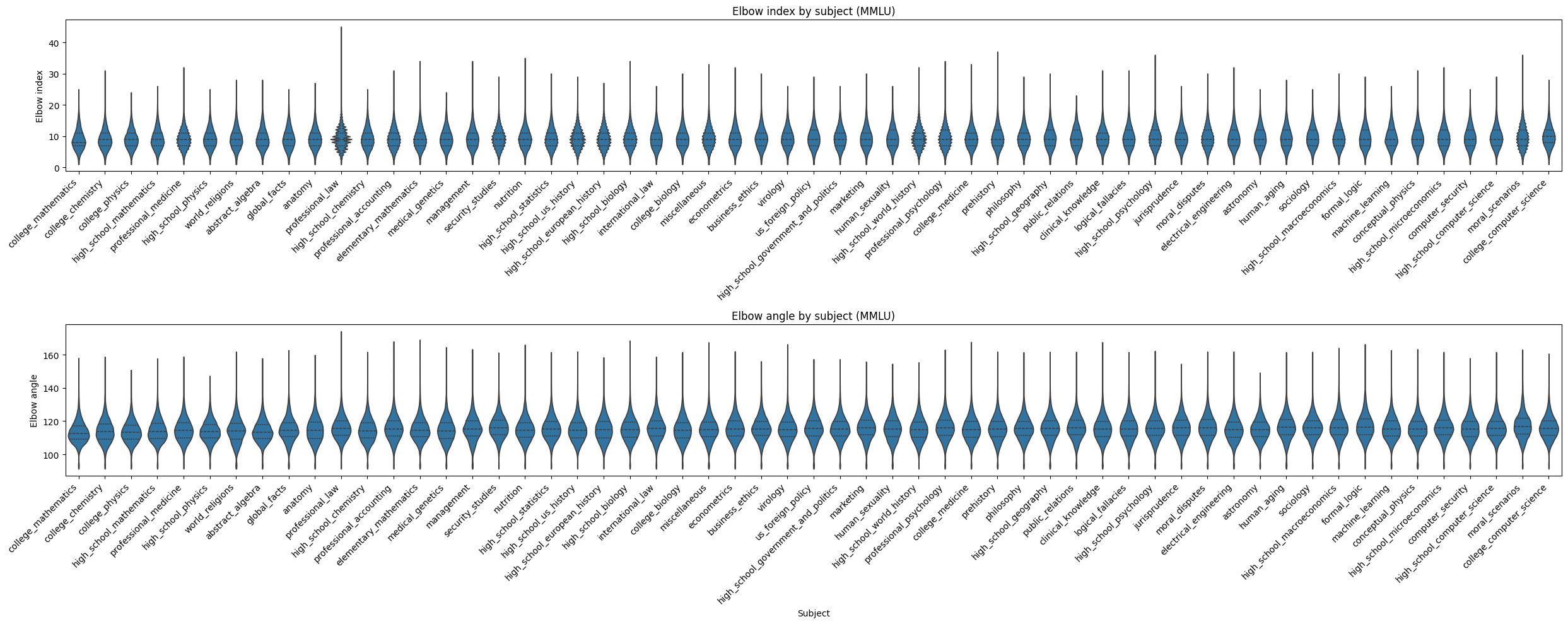}}
\caption{\textbf{Elbow angle and index vary little across subjects.} We plot the distribution of elbow index and elbow angle by subject above and conduct a one-way ANOVA test and find $p<0.001, \eta^2 = 0.02$ for elbow indices and $p<0.001,\eta^2=0.01$ with $N=1,704,247$. This indicates that a very small fraction of variance is attributed to subject.}
\label{supp1}
\end{figure}

\subsubsection{Elbow Angle and Elbow Index Correlation}
\begin{figure}[h]
    \centering
    \includegraphics[width=0.5\linewidth]{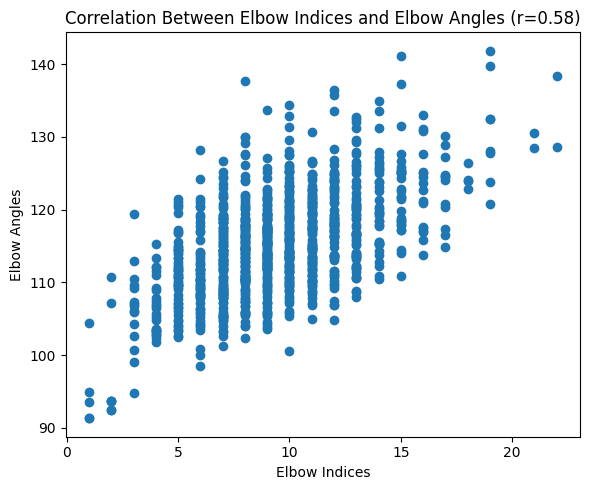}
    \caption{\textbf{Elbow indexes are correlated with elbow angle.} Elbow angle increases with elbow index, indicating that tokens requiring more active experts tend to exhibit less sharp transitions in the sorted router distribution (Pearson $r=0.58$)}
    \label{fig:elbowangle}
\end{figure}

\newpage
\subsection{Additional Load-Balance Guarantees Details}
\label{app:load_balance}

We analyze the impact of elbow-$K$ routing on expert load balance relative to standard top-$K$ routing. Elbow-$K$ routing acts as a monotone pruning of top-$K$ routing: for each token, it selects a subset of the experts chosen by top-$K$. Under this structure, we show that changes in distribution of the load balance are bounded and minimal.

\textbf{Notation.} Let $\mathcal{S}_{\mathrm{top}\text{-}K}(x)$ and $\mathcal{S}_{\mathrm{elb}\text{-}K}(x)$ denote the
experts selected by standard top-$K$ routing and elbow-$K$ routing, respectively, for token $x$. Let $k(x) = |\mathcal{S}_{\mathrm{elb}\text{-}K}(x)|$ denote the number of experts selected by
elbow-$K$ routing for token $x$, and define the expected fraction of pruned assignments as
\[
\delta = 1 - \frac{\mathbb{E}[k(x)]}{K}.
\]

For a routing rule $r \in \{\mathrm{top}\text{-}K,\,\mathrm{elb}\text{-}K\}$, define the expected
per-expert load
\[
L_i^{(r)} = \mathbb{E}_{x \sim \mathcal{D}}\!\left[\mathbf{1}\{ i \in \mathcal{S}_r(x)\}\right],
\]
and the corresponding normalized utilization distribution
\[
q_i^{(r)} = \frac{L_i^{(r)}}{\sum_{j=1}^N L_j^{(r)}},
\qquad
q^{(r)} \in \Delta^{N-1}.
\]

\begin{theorem}[Distribution Changes in Expert Utilization under Elbow-$K$ Routing]
\label{thm:utilization_stability}
For any MoE layer, let $q^{(\mathrm{top}\text{-}K)}$ and $q^{(\mathrm{elb}\text{-}K)}$ denote the
normalized per-expert utilization distributions under top-$K$ and elbow-$K$ routing, respectively.
Then, we claim that
\[
\bigl\| q^{(\mathrm{elb}\text{-}K)} - q^{(\mathrm{top}\text{-}K)} \bigr\|_1 \le  \frac{2\delta}{1-\delta}.
\]
\end{theorem}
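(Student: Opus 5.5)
Write $A = \sum_j L_j^{(\mathrm{top}\text{-}K)}$ and $B = \sum_j L_j^{(\mathrm{elb}\text{-}K)}$, and abbreviate $L_i = L_i^{(\mathrm{top}\text{-}K)}$ and $M_i = L_i^{(\mathrm{elb}\text{-}K)}$.

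\textbf{Normalizing constants.} Since $|\mathcal{S}_{\mathrm{top}\text{-}K}(x)| = K$ for every token, linearity of expectation gives $A = \mathbb{E}\bigl[\sum_i \mathbf{1}\{i\in\mathcal{S}_{\mathrm{top}\text{-}K}(x)\}\bigr] = K$. Likewise $B = \mathbb{E}[k(x)] = (1-\delta)K$.

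\textbf{Pointwise domination.} Because elbow routing is a monotone pruning, $\mathcal{S}_{\mathrm{elb}\text{-}K}(x)\subseteq \mathcal{S}_{\mathrm{top}\text{-}K}(x)$ for every $x$. Taking expectations of the indicators gives $0\le M_i\le L_i$ for all $i$. Summing, the total pruned mass is
\[
\sum_i (L_i - M_i) = A - B = \delta K.
\]

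\textbf{Triangle-inequality split.} Insert the intermediate vector $M_i/A$ and write
\[
q^{(\mathrm{elb}\text{-}K)}_i - q^{(\mathrm{top}\text{-}K)}_i = \Bigl(\frac{M_i}{B}-\frac{M_i}{A}\Bigr) + \Bigl(\frac{M_i}{A}-\frac{L_i}{A}\Bigr).
\]
The second term has $\ell_1$ norm
\[
\frac{\sum_i (L_i-M_i)}{A} = \frac{\delta K}{K} = \delta,
\]
using the nonnegativity of $L_i - M_i$ from the domination step. The first term is nonnegative because $B\le A$, and its $\ell_1$ norm is
\[
B\Bigl(\frac1B-\frac1A\Bigr) = 1-\frac{B}{A} = \delta.
\]
So the total is at most $2\delta$. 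Since $\delta\in[0,1)$, we have $2\delta \le \frac{2\delta}{1-\delta}$, which gives the claimed bound.

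An alternative route normalizes by $B$ throughout: the difference becomes $\frac{1}{B}\sum_i |M_i - (B/A)L_i|$, and bounding it with $|M_i-L_i| + (1-B/A)L_i$ gives $\frac{2\delta K}{(1-\delta)K} = \frac{2\delta}{1-\delta}$ directly. This is presumably the form the authors intend, and I would present whichever version matches the stated constant.

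\textbf{Main obstacle.} The substantive points are the pointwise subset relation $M_i\le L_i$ and the fact that the top-$K$ total load is exactly $K$. The first follows from the definition $k(x)=\min(e(x),K)$ together with the fact that elbow routing keeps a prefix of the same sorted order. The second needs $N\ge K$. One further care point: if $\mathbb{E}[k(x)]=0$ then $q^{(\mathrm{elb}\text{-}K)}$ is undefined. This cannot happen, because Algorithm~\ref{alg:elbow} returns $k = e+1 \ge 1$, so $\delta<1$. Beyond these, the argument is just the triangle inequality.
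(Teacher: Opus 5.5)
Your proof is correct, and your main route differs from the paper's; it actually proves a stronger bound. The paper sets $r = R/(\delta K)$ with $R = L^{(\mathrm{top}\text{-}K)} - L^{(\mathrm{elb}\text{-}K)}$ and derives the exact identity $q^{(\mathrm{elb}\text{-}K)} - q^{(\mathrm{top}\text{-}K)} = \frac{\delta}{1-\delta}(q^{(\mathrm{top}\text{-}K)} - r)$. It then bounds $\|q^{(\mathrm{top}\text{-}K)} - r\|_1 \le 2$ using only that both vectors lie in the simplex. Your ``alternative route'' is essentially this argument.

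Your main route instead passes through the intermediate vector $M/A$. It splits the change into a pure renormalization part and a pure removal part, each of $\ell_1$ mass exactly $\delta$, which gives $\|q^{(\mathrm{elb}\text{-}K)} - q^{(\mathrm{top}\text{-}K)}\|_1 \le 2\delta$. This bound is tight: if every token has the same top-$K$ set and the same elbow prefix of size $k$, the distance is exactly $2(1-k/K)=2\delta$. It also avoids dividing by $\delta$, which the paper's definition of $r$ requires.

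The paper's slack comes from discarding the domination constraint $\delta r_i \le q^{(\mathrm{top}\text{-}K)}_i$ (i.e.\ $R_i\le L_i$) in its final step. Keeping it gives $\|q^{(\mathrm{top}\text{-}K)} - r\|_1 \le 2(1-\delta)$, which recovers your $2\delta$. What the paper's identity buys is an explicit expression for the change in terms of the removal profile $r$. It reuses this in Corollary~2 through the alignment term $\langle q^{(\mathrm{top}\text{-}K)}, r\rangle$.

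You need not hedge about which version to present. The $2\delta$ argument proves the stated bound outright, since $2\delta \le 2\delta/(1-\delta)$ for $\delta \in [0,1)$.
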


\begin{proof}
By construction, $\mathcal{S}_{\mathrm{elb}\text{-}K}(x) \subseteq \mathcal{S}_{\mathrm{top}\text{-}K}(x)$
for all tokens $x$. Therefore, working at the vector level, define the removed-mass vector $R = L^{(\mathrm{top}\text{-}K)} - L^{(\mathrm{elb}\text{-}K)} \in \mathbb{R}^N_+.$

Since top-$K$ selects exactly $K$ experts per token and elbow-$K$ selects $k(x)$ experts, we have
\[
\sum_{i=1}^N L_i^{(\mathrm{top}\text{-}K)} = K,
\qquad
\sum_{i=1}^N L_i^{(\mathrm{elb}\text{-}K)} = \mathbb{E}[k(x)] = (1-\delta)K, \qquad  \sum_{i=1}^N R_i = \delta K.
\]

Passing to the corresponding normalized removed-mass distribution we have
\[
r = \frac{R}{\delta K},
\qquad r \in \Delta^{N-1},
\]
which combined with  the definitions of the normalized utilization distributions gives
\[
q^{(\mathrm{elb}\text{-}K)}
= \frac{L^{(\mathrm{elb}\text{-}K)}}{(1-\delta)K}
= \frac{L^{(\mathrm{top}\text{-}K)} - R}{(1-\delta)K} = \frac{q^{(\mathrm{top}\text{-}K)}K -R}{(1-\delta)K} = \frac{q^{(\mathrm{top}\text{-}K)} -r\delta}{1-\delta} \implies
\]
\[
\implies q^{(\mathrm{elb}\text{-}K)} - q^{(\mathrm{top}\text{-}K)}
=
\frac{\delta}{1-\delta}\bigl(q^{(\mathrm{top}\text{-}K)} - r\bigr).
\]

Since $q^{(\mathrm{top}\text{-}K)}$ and $r$ are probability distributions, we have that worst-case
$\|q^{(\mathrm{top}\text{-}K)} - r\|_1 \le \sum_{i=1}^N |q^{(\mathrm{top}\text{-}K)}_i - r_i| \le \sum_{i=1}^N q^{(\mathrm{top}\text{-}K)}_i + \sum_{i=1}^N  r_i \le 2$, which gives
\[
\bigl\| q^{(\mathrm{elb}\text{-}K)} - q^{(\mathrm{top}\text{-}K)} \bigr\|_1
\le
\frac{2\delta}{1-\delta}.
\]
\end{proof}

\vspace{-8mm}
\textbf{Corollary 1.} The above bound implies that the change in the load of the most-utilized expert is
\[
\big|\|q^{(\mathrm{elb}-K)}\|_\infty - \|q^{(\mathrm{top}-K)}\|_\infty\big|
\le
\|q^{(\mathrm{elb}-K)} - q^{(\mathrm{top}-K)}\|_1
\le
\frac{2\delta}{1-\delta}.
\]
Thus, elbow-$K$ routing cannot substantially increase the load of the most-utilized expert, which is often a determinant of inference-time latency.

\textbf{Corollary 2.} The above bound implies that the coefficient of variation $\mathrm{CV}^2(q) = N\| q\|_2^2-1$ change is
\[ 
\Big|\mathrm{CV}^2(q^{(\mathrm{elb}-K)})-\mathrm{CV}^2(q^{(\mathrm{top\text{-}K})})\Big| 
=
N \left| \, \|q \,^{{(\mathrm{elb}-K)}}\|_2^2 - \|q \,^{(\mathrm{top-}K)}\|^2_2\right| 
\le 
\] 
\[ 
=
\frac{1}{(1-\delta)^2}
\Bigl|
(2\delta-\delta^2)\|q^{(\mathrm{top}\text{-}K)}\|_2^2
- 2\delta\langle q^{(\mathrm{top}\text{-}K)}, r\rangle
+ \delta^2\|r\|_2^2
\Bigr|  \le \frac{2N\delta}{(1-\delta)^2},
\]
where we used that $0\le \|q^{(\mathrm{top}\text{-}K)}\|_2^2\le 1$,
$0\le \|r\|_2^2\le 1$,
and $0\le \langle q^{(\mathrm{top}\text{-}K)},r\rangle\le 1$ since $q^{(\mathrm{top}\text{-}K)},r\in\Delta^{N-1}$ to get $0\le (2\delta-\delta^2)\|q^{(\mathrm{top}\text{-}K)}\|_2^2+ \delta^2\|r\|_2^2\le 2 \delta$.

The alignment term $\langle q^{(\mathrm{top}\text{-}K)}, r\rangle$ captures whether pruning removes
assignments primarily from heavily utilized experts as opposed to low-utilized ones. Pruning aligned with high-load experts tends to
reduce $\mathrm{CV}^2$, while the opposite pattern can increase it. \\

\textbf{Discussion.} As we have mentioned in Section~\ref{sec:load_balance_guarantees}, empirically we observed across the six benchmarks that $\delta = 1 - \frac{\mathbb{E}[k(x)]}{K} =0.05$.  

Using this value, Theorem~1 guarantees that the normalized expert utilization distribution changes by at most
\[
\|q^{(\mathrm{elb\text{-}K})} - q^{(\mathrm{top\text{-}K})}\|_1
\;\le\;
\frac{2\delta}{1-\delta}
\;\approx\;
0.10.
\]
This tells us that after pruning, no more than worst-case 5\% of the total normalized expert utilization mass can be redistributed across experts. Empirically, the measured $\ell_1$ changes in normalized utilization are around $0.05$ or less across layers (Table~\ref{loadbalance}), indicating
that the relative expert usage profiles are preserved even more closely than required by the guarantee.

Similarly, Corollary~2 bounds the change in squared coefficient of
variation as
\[
\bigl|CV^2(q^{(\mathrm{elb\text{-}K})}) - CV^2(q^{(\mathrm{top\text{-}K})})\bigr|
\;\le\;
\frac{2N\delta}{(1-\delta)^2}
\;\approx\;
7.
\]
This is a rather loose bound. Empirically, the observed changes in CV vary from $1$ to $8\%$ across layers (Table~\ref{loadbalance}), indicating that elbow-based routing preserves not only the total utilization distribution but also its variability structure significantly more closely than required by the guarantee.

\subsection{Implementation Details}
\label{implementation}
We implemented elbow-based routing on OLMoE-1B-7B-0924-Instruct (\cite{olmoe}), a sparse MoE model with 64 experts and default top-8 routing. We select this model for its state-of-the-art performance, as well as its small model size given computational resource constraints. All evaluations were performed with an NVIDIA H200 GPU. We implement elbow-based routing via a runtime monkey patch that replaces \verb|OlmoeSparseMoeBlock.forward| with an instrumented variant, while caching the original method for reversible restoration. 

FLOPs are estimated analytically per MoE block as the sum of (i) router cost—one dense linear projection ($2NdE$), softmax ($5NE$), and a top-k/sort term ($NE\log_2 E$)—and (ii) expert MLP cost—up and down projections ($2N\bar{k} d m$) each plus activation ($N\bar{k} m$), where N=batch$\times$seq, d=hidden dim, m=4d, E=\# experts, and $\bar{k}$ is the observed mean selected experts per token. For dynamic routing we additionally account for elbow detection overhead as per-token sort ($NE\log_2 E$) plus linear-time normalization, subtraction, and argmax terms ($\approx (4+1+1)NE$).

Latency was measured with wall-clock timing around the MoE block forward pass using \verb|time.perf_counter()|, with an explicit \verb|torch.cuda.synchronize()| immediately before starting and after producing the final output tensor to ensure all queued CUDA kernels completed. 

All code is available at \url{https://github.com/rpan188/elbow-routing}. 

\end{document}